\documentclass{article}
\usepackage{log_2025}				

\usepackage{caption}
\captionsetup{font=small}


\DeclareMathAlphabet{\mathsfit}{\encodingdefault}{\sfdefault}{m}{sl}
\SetMathAlphabet{\mathsfit}{bold}{\encodingdefault}{\sfdefault}{bx}{n}


\def\gG{{\mathcal{G}}}



\def\X{{\mathbf{X}}}









\usepackage{booktabs}						
\usepackage{multirow}						
\usepackage{amsfonts}						
\usepackage{amsthm}
\usepackage{graphicx}						
\usepackage{duckuments}						
\usepackage{placeins}
\usepackage{subcaption}
\usepackage[numbers,compress,sort]{natbib}	
\setlength{\textfloatsep}{1pt plus 1pt minus 1pt}
\setlength{\intextsep}{1pt plus 1pt minus 1pt} 

\usepackage{xcolor} 

\newtheorem{theorem}{Theorem}

\newcommand{\id}{\iota}

\title[Short-Range Oversquashing]{Short-Range Oversquashing}

\author[Mishayev et al.]{%
Yaaqov Mishayev\\
Technion – Israel Institute of Technology\\
\email{yakov-m@campus.technion.ac.il}
\And
Yonatan Sverdlov\\
Technion – Israel Institute of Technology\\
\email{yonatans@campus.technion.ac.il}
\And
Tal Amir\\
Technion – Israel Institute of Technology\\
\email{talamir@technion.ac.il}
\And
Nadav Dym\\
Technion – Israel Institute of Technology\\
\email{nadavdym@technion.ac.il}
}

\begin{document}

\maketitle

\begin{abstract}
Message Passing Neural Networks (MPNNs) are widely used for learning on graphs, but their ability to process long-range information is limited by the phenomenon of \emph{oversquashing}. This limitation has led some researchers to advocate Graph Transformers as a better alternative, whereas others suggest that it can be mitigated within the MPNN framework, using virtual nodes or other rewiring techniques.

In this work, we demonstrate that oversquashing is not limited to long-range tasks, but can also arise in short-range problems. This observation allows us to disentangle two distinct mechanisms underlying oversquashing: (1) the \emph{bottleneck phenomenon}, which can arise even in low-range settings, and (2) the \emph{vanishing gradient} phenomenon, which is closely associated with long-range tasks. 

We further show that the short-range bottleneck effect is not captured by existing explanations for oversquashing, and that adding virtual nodes does not resolve it. In contrast, transformers do succeed in such tasks, positioning them as the more compelling solution to oversquashing, compared to specialized MPNNs.\footnote{Code available at \url{https://github.com/YakovM93/Short-Range-Oversquashing}.}
\end{abstract}

\section{Introduction}
Graph Neural Networks (GNNs) are the leading tool for learning on graph-structured data, with many of the most popular models falling into the category of \emph{Message Passing Neural Networks} (MPNNs). While MPNNs are computationally very efficient due to their ability to leverage graph sparsity, they are known to be successful only when using a small number of MPNN layers, typically 2--4. The difficulty in training deep MPNNs is commonly attributed to the phenomena of \emph{oversmoothing} \cite{oversmoothing} and \emph{oversquashing} \cite{alon2020bottleneck}.
Oversmoothing is the phenomenon in which, as the number of MPNN layers increases, node features become nearly indistinguishable from one another, with the extreme case often termed \emph{total collapse}. Our focus in this paper will primarily be on oversquashing.  

The term \emph{oversquashing}, coined by \citet{alon2020bottleneck}, refers to the difficulty of training MPNNs on \emph{long-range tasks}, that is, tasks that require communication between distant nodes to solve the problem accurately.
The authors explained that this  difficulty is caused by a \emph{bottleneck effect}, where the intermediate nodes on the path between two distant nodes need to have an increasingly large feature dimension in order to solve the problem by message passing. Later papers put more emphasis on low graph connectivity and vanishing gradients as the essential component leading to oversquashing. 

Oversmoothing and oversquashing are at the center of active discussion in the graph-learning community. On the one hand, many papers attempt to enable the use of MPNNs for long-range learning using techniques such as virtual nodes and other rewiring methods, which are aimed at reducing the range and difficulty of the learning problem. In a different direction, several papers have argued that \emph{Graph Transformers} (GTs) outperform MPNNs due to their ability to handle long-range tasks \cite{kreuzer2021rethinking,rampasek2022recipe,shirzad2023exphormer,dwivedi2022long}. Later results cast doubt on this claim, showing that with careful training, MPNNs with virtual nodes can obtain competitive results in many graph benchmarks, including Long Range Graph Benchmark (LRGB) \cite{dwivedi2022long}, which consists of tasks that arguably require long-range interactions. Similarly, it was argued in a recent work \cite{cai2023connection} that MPNNs with virtual nodes are able to simulate the attention mechanism: ``despite recent efforts, we still lack good benchmark datasets where GT can outperform MPNN by a large margin.''  

Our main goal in this paper is to advance the theoretical understanding of oversquashing by demonstrating that it can also arise in short-range tasks. Specifically, we construct a family of graph-learning problems that admit exact solutions with just two MPNN iterations, yet we prove that any MPNN must employ very large node-feature dimensions to solve them. Thus, these problems are affected by the bottleneck effect, even though the underlying graphs are well connected.
We also show empirically that the vanishing gradient problem does not occur for these problems.

In contrast, we show that popular synthetic long-range tasks considered in the literature suffer from vanishing gradients but not from the bottleneck effect. Thus, we claim that these are two distinct effects that were inadvertently mixed together.  

In addition, we show empirically that standard MPNNs, even when augmented with virtual nodes, perform poorly on these problems, whereas transformers solve them easily. This yields an interpretable test case in which MPNNs with virtual nodes are clearly outperformed by transformers.

\subsection{Related Work}
\paragraph{Oversquashing in graph neural networks}
The oversquashing phenomenon was first identified by \citet{alon2020bottleneck}, who demonstrated that MPNNs struggle to propagate information between distant nodes due to an exponential growth in the nodes' receptive field. They introduced the Tree Neighbors-Match problem, discussed in \cref{sec_oversquashing}, as a canonical example, where the bottleneck effect arises as the problem radius increases, arguing that the exponential growth of receptive fields with depth creates information bottlenecks at intermediate nodes.

\paragraph{Theoretical explanations of oversquashing}
Several theoretical frameworks have been proposed to explain oversquashing. \citet{topping2021understanding} connected oversquashing to the Ricci curvature of graph edges, proving that edges with negative curvature act as information bottlenecks and deriving bounds on gradient norms in terms of curvature. \citet{di2023over} provided direct gradient-decay analysis, deriving bounds on Jacobian norms that predict exponential decay when either the distance between nodes or the number of message-passing iterations is large.

\paragraph{Spectral properties and graph connectivity}
The \emph{spectral gap} (smallest nonzero eigenvalue $\lambda_1$ of the normalized graph Laplacian) has been proposed as a key indicator of oversquashing potential. When $\lambda_1$ is close to zero, the graph is nearly disconnected into multiple components, suggesting poor information flow. Several works have used spectral graph theory to predict and mitigate oversquashing, proposing rewiring strategies that increase $\lambda_1$ to improve information propagation \citet{topping2021understanding}.
\citet{black2023understanding} introduced \emph{effective resistance} as a measure for predicting oversquashing. For nodes $u$ and $v$, the effective resistance is $R_{u,v} = (1_u - 1_v)L^+(1_u - 1_v)$, where $L^+$ is the pseudoinverse of the graph Laplacian. This quantity, borrowed from electrical network theory, accounts for all paths between nodes and is proportional to commute time in random walks \cite{chandra1989electrical}. Black et al. proved upper bounds on gradient norms in terms of effective resistance, establishing connections between high resistance and vanishing gradients.

\paragraph{Graph Transformers and attention mechanisms}
Because self attention enables all-pairs communication in a single hop, Graph Transformers are often argued to mitigate oversquashing when paired with structural/positional encodings (e.g., Graphormer \cite{kreuzer2021rethinking}; GPS \cite{rampasek2022recipe}; Exphormer \cite{shirzad2023exphormer}). \citet{cai2023connection} showed theoretically that MPNN with virtual nodes (VN) can approximate self attention (including linear transformers) and, empirically, that strong MPNN+VN baselines are competitive on LRGB, sharpening the GT--MPNN comparison. \citet{rosenbluth2024distinguished} studied uniform expressivity and proved that GT and MPNN+VN are incomparable---neither subsumes the other---while much of ``universality'' in non-uniform settings stems from powerful positional encodings; their experiments report mixed outcomes across datasets. The Long-Range Graph Benchmark (LRGB) \cite{dwivedi2022long} was introduced to stress long-distance interactions, and subsequent re-evaluations with stronger baselines have narrowed parts of the once-reported transformer advantage \cite{tonshoff2023did}.

\paragraph{Oversmoothing and related phenomena}
Oversmoothing is another fundamental limitation of deep GNNs, in which node features become indistinguishable as the number of layers increases. \citet{oversmoothing} provide a comprehensive survey of oversmoothing, showing that it emerges as a consequence of repeated averaging operations in deep networks. Oversmoothing is inherently a deep-network phenomenon, requiring many layers before node features converge to similar values.

\paragraph{Information-theoretic perspectives}
\citet{alon2020bottleneck} introduced information-theoretic arguments to understand oversquashing, connecting information capacity requirements to exponentially growing receptive fields in deep networks. They argued that intermediate nodes must store information about an exponentially growing neighborhood, creating fundamental bottlenecks. This perspective has influenced subsequent work on understanding the theoretical limits of message-passing architectures and motivated the search for alternative architectures that avoid these bottlenecks.

More recently, \citet{arnaiz2025oversmoothing} published a broad position paper that critically examines common beliefs in graph machine learning, exposing conceptual ambiguities surrounding notions such as oversmoothing and oversquashing. Their work argues that many of these ideas have become intertwined in the literature and calls for clearer distinctions between them. This perspective complements ours by emphasizing the importance of separating computational bottlenecks (oversquashing) from topological assumptions, thereby motivating the more fine-grained theoretical and empirical analysis we undertake here.

\subsection{Notation and Preliminaries}
We begin by introducing notation. Graphs are denoted by $G=(V,E,\X)$, where $V$ is a finite set of nodes, $E$ is the set of graph edges, and $\X=(x_v)_{v\in V}$ denotes node feature vectors $x_v \in \mathbb{R}^d$. The set of neighbors of node $v$ is denoted by $\mathcal{N}_v$. Message-Passing Neural Networks (MPNNs) are graph neural networks that update each node's feature by combining its own feature with the features of its neighbors. Namely, the feature vector $h_v^k $ at each
layer $k$ is iteratively computed by
\begin{equation}\label{eq:mpnn}
    h^0_{v} = x_v, \quad  h^{k+1}_{v} = \phi_k(h^{k}_v,\psi_k(\{ h^{k}_{u}|u\in \mathcal{N}_v\})),
\end{equation}
where $\psi_k$ maps the multiset of neighboring node features in a permutation-invariant fashion to a vector, and $\phi_k $ maps pairs of vectors to a single vector.
Popular examples of MPNNs include GIN \cite{Xu:2019powerful} GAT \cite{Velickovic:2018we} GCN \cite{Kipf:2017tc} and many others \cite{Hamilton:2017tp,li:2016ggnn,ruiz:2020grnn}.

\section{Oversquashing: Long Range and Short Range}
\label{sec_oversquashing}
%
\begin{figure}[t]
	\centering
	\includegraphics[width=\linewidth]{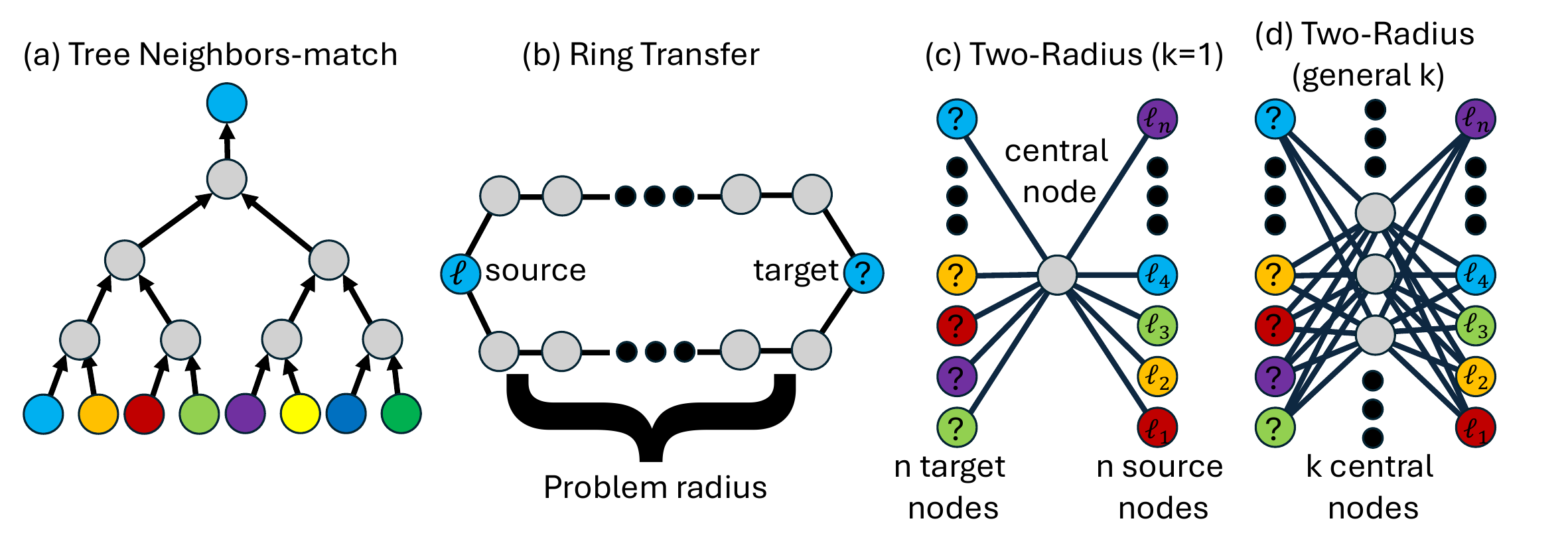}
    \caption{
    Illustration of synthetic graph-transfer problems. 
    (a) \textbf{Tree Neighbors-Match}: information is transferred from leaves to a target node through a tree of depth~$r$. 
    (b) \textbf{Ring Transfer}: a source and target are connected by two disjoint paths of length~$r$. 
    (c) \textbf{Two-Radius}: $n$ sources, $n$ targets, and a single central node. 
    (d) \textbf{Generalized Two-Radius}: $k$ central nodes. 
    Node colors represent source and target identifiers; gray denotes central nodes.}
    \label{fig:problem}
\end{figure}
To improve the theoretical understanding of oversquashing, we study a family of synthetic graph-transfer problems. We begin by introducing terminology and notation that will be used throughout the examples.

We consider graphs whose node set is a disjoint union $V=S\cup C \cup T $, where $S$ denotes \emph{source nodes}, $C$ denotes \emph{central nodes}, and $T$ denotes \emph{target nodes}. In essence, the goal of these tasks is to transfer information from source nodes to target nodes, with the central nodes serving solely to conduct that information.

Each node feature is a pair \(x_v = (\id_v, \ell_v)\),  
where \(\id_v \in \{0,1,\ldots,n\}\) is a \emph{node identifier}, and  
\(\ell_v \in \{1,\ldots,L\}\) is a \emph{node label}. 
Source nodes have unique identifiers $\id_v \in \{1\ldots,n\}$, and their labels represent information to be transferred. 
Central nodes are all assigned the identifier $\id_v=0$. The identifiers of target nodes specify from which source they should receive information (see \cref{fig:problem}). The identifiers and labels are encoded as one-hot vectors.


We begin with two well-known problems from the graph learning literature. 
The first problem, \textbf{Tree Neighbors-Match}, was introduced by \citet{alon2020bottleneck}. 
We consider a binary tree, whose source nodes are its leaves, each assigned a distinct identifier $\id_s$ and a label $\ell_s$. The root of the tree is connected to a target node $t$, which is assigned an identifier $\id_t$ (see \cref{fig:problem}(a)). The goal is to assign to the target node the label of the source node that has the same node identifier. Namely, the MPNN needs to find the leaf node $s$ for which $\id_s=\id_t $, and set the output feature $h^{K}_{t}$ of node $t$ to $\ell_s$.  
 
Surprisingly, \citet{alon2020bottleneck} demonstrated empirically that, as the depth of the tree increases, standard MPNNs struggle to solve this seemingly simple task. They attribute this to the exponential growth in the number of leaves with the depth. Since message passing aggregates information locally, solving the task perfectly requires the root node to encode the information from all leaves, which in turn demands a vector of very high dimension---rendering the approach impractical.
 
 
The second problem, \textbf{Ring Transfer}, is a simple graph-transfer task introduced by \citet{bodnar2021weisfeiler} and further studied by \citet{di2023over}. Here, a source node $s$ and a target node $t$ are connected by two paths of length $r$ (see \cref{fig:problem}(b)). The goal is to transfer the label $\ell_s$ from the source to the target. \citet{di2023over} showed that this task also poses difficulties for MPNNs. Their analysis focuses on vanishing gradients rather than on the bottleneck phenomenon. Our first theoretical result, stated below, confirms this intuition: the Ring Transfer task indeed requires long-range interaction and is therefore prone to vanishing gradients. However, it does \emph{not} suffer from the bottleneck effect, in the sense that it does not require high-dimensional node features.

\begin{theorem}\label{thm:constant_dim}
For any $r \geq 1$, the Ring Transfer task with radius $r$ requires at least $r$ iterations of an MPNN. 
However, there exists an MPNN that solves the task exactly whose node feature dimension is independent of $r$. This also holds if the ring topology is replaced with any other graph. 
\end{theorem}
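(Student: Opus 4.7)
The plan is to handle the two assertions of the theorem separately. The lower bound on the number of iterations follows from a standard receptive-field argument, and the constant-dimension upper bound follows from an explicit construction; no substantial obstacle is expected.

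For the iteration lower bound, I would invoke the elementary fact that after $k$ MPNN iterations the hidden state $h^k_t$ is a function only of the initial features $\{x_u : d(u,t) \le k\}$ of nodes within graph distance $k$ of $t$. In the Ring Transfer graph, the source $s$ and target $t$ are joined by two disjoint paths of length $r$, so $d(s,t) = r$. For any $k < r$, the source lies outside the $k$-ball around $t$, making $h^k_t$ independent of $\ell_s$. Since the task requires $h^k_t = \ell_s$, which must vary with $\ell_s$ ranging over $\{1,\dots,L\}$, we obtain a contradiction. The same argument applies verbatim when the ring is replaced by any graph, giving a lower bound equal to $d(s,t)$.

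For the constant-dimension upper bound, I would construct an explicit MPNN whose hidden state at every node is an $L$-dimensional label payload, independent of $r$. The initialization extracts the one-hot encoding of $\ell_v$ from $x_v$, which is the zero vector at every central and target node and the one-hot encoding of $\ell_s$ at the unique source. The update uses coordinatewise max throughout: set $\psi_k(\{h_u : u \in \mathcal{N}_v\}) = \max_u h_u$ and $\phi_k(h_v, m) = \max(h_v, m)$. Since every node holds either the zero vector or the fixed one-hot vector $\ell_s$, the coordinatewise max preserves this dichotomy, and a straightforward induction on $k$ shows that $h^k_v = \ell_s$ whenever $d(s,v) \le k$ and $h^k_v = 0$ otherwise. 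Taking $k = r$ then yields $h^r_t = \ell_s$ as desired. Both $\psi_k$ and $\phi_k$ are standard MPNN primitives---either implemented directly as coordinatewise max or as sum aggregation composed with a small MLP---so the construction fits the MPNN formalism above, with a fixed dimension $L$ that is independent of $r$. The same propagate-by-max MPNN relies only on the label payload and on BFS-style propagation, not on the ring topology, so it solves the task on any graph in which $d(s,t) \le r$; the only mild care is in the initial one-hot encoding, to ensure the source is the unique node with a nonzero payload, which is immediate from the problem setup. Conceptually, the result cleanly separates the notions of ``range'' (controlling the required number of iterations) and ``information content'' (controlling the required feature dimension), showing that Ring Transfer is long-range but not bottlenecked.
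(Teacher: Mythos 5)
Your proof is correct and takes essentially the same route as the paper's: the lower bound is the standard receptive-field argument (which you spell out more explicitly than the paper, which simply declares it ``clear''), and the upper bound is a constant-dimension MPNN that performs BFS-style propagation of the source's label using coordinatewise max. The main cosmetic difference is the feature representation: you carry an $L$-dimensional one-hot label payload, whereas the paper carries a two-dimensional state $(\text{indicator bit}, \text{label scalar})$ and gates the label by multiplying it with the indicator bit \emph{inside} the update rule. Both give a dimension independent of $r$, so the choice is immaterial for the theorem.

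One imprecision worth fixing: you write that the initialization ``extracts the one-hot encoding of $\ell_v$ from $x_v$, which is the zero vector at every central and target node.'' That is not literally true in the problem setup — every node, including central and target nodes, carries a genuine label $\ell_v \in \{1,\dots,L\}$, so its one-hot encoding is never zero. What you need is to gate the one-hot label by the source-identifier bit $\id_v$, i.e.\ set $h^0_v = \id_v \cdot (\text{one-hot of }\ell_v)$, so that only the source starts with a nonzero payload. Your ``mild care'' remark suggests you are aware of this, but the earlier sentence should say gating-by-identifier rather than direct extraction; as written, the induction would fail whenever some non-source node happened to share the source's label. The paper sidesteps this by propagating the indicator bit alongside the label and performing the product at every step, which achieves the same gating effect.
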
 

\begin{proof}[proof idea]
The necessity of at least $r$ iterations is intuitive and well known. Intuitively, a constant feature dimension is sufficient because all that is needed is to recursively transfer the input source feature vector to neighboring nodes until the target node is reached. For a formal proof, see \cref{app:proofs}.
\end{proof}
 
\subsection{The Two-Radius Problem}
We now introduce a new synthetic graph-transfer task, the \textbf{Two-Radius problem}. We show that, although it is solvable in theory with only two MPNN iterations, it nevertheless suffers from the bottleneck phenomenon. We further demonstrate that MPNNs struggle to solve it in practice.

We first consider a simple variant of the problem---a family of graphs denoted by $\gG_n$, with $n \geq 1$ (see \cref{fig:problem}(c)). 
Each graph $\gG_n = (V,E,\X)$ has a vertex set $V = S \cup C \cup T$ consisting of $n$ source nodes, $n$ target nodes, and a single central node. 
The source nodes $s \in S$ are assigned distinct identifiers $\id_s \in \{1,\ldots,n\}$, and the $n$ target nodes $t\in T$ are assigned the same set of identifiers. 
Each source node is also assigned a distinct label $\ell_s \in \{1,\ldots,n\}$, not necessarily identical to its identifier.
The goal is to construct an MPNN such that after $K$ iterations, the output features $h_t^K$ of the target nodes satisfy
\[
h_t^K = \ell_s \quad \text{whenever } \id_t = \id_s \, .
\]

As we show below, this problem can be solved exactly by an MPNN, but only at the cost of a very high feature dimension, of order $n \log n$. 
This is perhaps unsurprising, since the graphs under consideration are nearly  disconnected: removing the single central node disconnects the graph and renders the task impossible to solve by an MPNN. 
Nonetheless, in the next section, we show that many of the measures proposed in the MPNN literature to assess connectivity and predict oversquashing fail to identify this graph as problematic. 
Moreover, the graphs in $\gG_n$ can be made much better-connected \emph{without} resolving the bottleneck issue.

To show this, we consider a more general family of graphs $\gG_{n,k}$ with $k$ central nodes (see \cref{fig:problem}(d)). 
Each central node is connected to all source and target nodes. 
Due to the permutation invariance of MPNNs, the central nodes are indistinguishable. 
As a result, adding more central nodes does not resolve the bottleneck phenomenon, even though it substantially improves the connectivity of the graph. 
We formalize this in the following theorem.

\begin{theorem}\label{thm_two_radius}
There exists an MPNN with $T=2$ iterations that exactly solves the transfer task on $\gG_{n,k}$. However, when using $b$-bit floating-point arithmetic, any MPNN that solves the transfer task on $\gG_{n,k}$ with $T$ iterations and intermediate node features of dimension $d_t$ must satisfy
$$\sum_{t=1}^T d_t \geq \frac{n}{2b}\log_2(n/2) $$ 
for every central node $c \in C$.
\end{theorem}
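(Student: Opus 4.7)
My plan splits along the two claims.

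For the upper bound (existence of a $T=2$ solution), I would build the MPNN explicitly. In iteration~$1$, the permutation-invariant aggregator $\psi_1$ at each central node maps the multiset of source features $\{(\id_s,\ell_s)\}$ into a single vector that encodes the lookup table $\id_s \mapsto \ell_s$, for example by summing one-hot outer products $e_{\id_s}\otimes e_{\ell_s}$. In iteration~$2$, each target $t$ receives this table from the central nodes, and its update $\phi_2$ uses $\id_t$ to select the correct entry. Because $\phi$ and $\psi$ in the MPNN definition are unrestricted functions, this construction is immediate; it uses central-node features of dimension $O(n^2)$ (as a one-hot table), consistent with the lower bound below up to log factors.

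For the lower bound I would proceed in three steps. \textbf{Step 1 (symmetry).} Any permutation of $C$ is a graph automorphism of $\gG_{n,k}$ that fixes every source and target, and the central nodes share identical initial features. By induction on $t$, using permutation invariance of $\psi_t$, I obtain $h_c^t = h_{c'}^t$ for all central nodes $c,c'$ and every $t$. A corollary is that from any target's viewpoint the $k$ central neighbors collapse into a single effective information source, which is why $k$ disappears from the bound. \textbf{Step 2 (bottleneck).} Since deleting $C$ disconnects $S$ from $T$, every source-to-target path traverses a central node. Unfolding the MPNN recursion together with Step~$1$ yields $h_t^T = F_t(\id_t, h_c^0, h_c^1,\ldots, h_c^{T-1})$ for a deterministic $F_t$; since $h_c^0$ is the same constant across all instances, only $h_c^1,\ldots,h_c^{T-1}$ depends on the labels. \textbf{Step 3 (counting).} Vary the source labels over all $n!$ permutations. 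For any two distinct assignments $\sigma\neq\sigma'$, some source $s$ has $\ell_s(\sigma)\neq\ell_s(\sigma')$, and the target with $\id_t=\id_s$ must output different values; by Step~$2$ the trajectory $(h_c^1,\ldots,h_c^{T-1})$ must also differ. Hence $\sigma\mapsto (h_c^1,\ldots,h_c^{T-1})$ is injective on $S_n$. Under $b$-bit arithmetic there are at most $2^{b\sum_{t=1}^{T}d_t}$ such trajectories, so $b\sum_t d_t \geq \log_2(n!)$. The elementary estimate $\log_2(n!)\geq \sum_{k=\lceil n/2\rceil+1}^{n}\log_2 k \geq (n/2)\log_2(n/2)$ then gives exactly the claimed bound.

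The main obstacle I anticipate is the formalization of Step~$1$: one must carefully exploit equivariance of MPNN updates under graph automorphisms, and in particular handle the fact that target nodes also send messages \emph{back} to the central nodes during message passing. This backflow is benign because target identifiers are fixed across the family of label assignments we are varying, so the relevant information content still has to traverse source $\to$ central $\to$ target and be absorbed in the central-node trajectory; but it slightly complicates the clean picture and must be spelled out in the induction.
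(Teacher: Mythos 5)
Your proposal is correct and takes essentially the same route as the paper's proof: vary the source-label assignment over all $n!$ permutations, observe that the target outputs are determined by the trajectory of a single central node's features (together with the fixed target inputs), deduce injectivity of the map from label permutations to that trajectory, and close with the bit-counting estimate $b\sum_t d_t \geq \log_2(n!) \geq \tfrac{n}{2}\log_2(n/2)$. You supply two things the paper leaves implicit---a careful automorphism/induction argument for why all central nodes carry identical features, and an explicit unfolding $h_t^T = F_t(x_t, h_c^0,\ldots,h_c^{T-1})$ that formally localizes the dependence at the cut $C$---and you also sketch the $T=2$ upper-bound construction, which the paper's proof omits entirely; your concern about ``backflow'' from targets to $C$ is, as you correctly anticipate, benign, because the injectivity argument only needs $h_t^T$ to factor through $(x_t, h_c^{0:T-1})$, which holds regardless of how $h_c^t$ itself is computed.
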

\begin{proof}
Fix some ordering of the source and target node identifiers, and some initial labeling, leading to some annotated graph $G \in \gG_{n,k}$. Next, for any permutation $\tau \in S_n$, consider the new problem instance obtained by permuting the labels by $\tau$ while leaving the source and target nodes fixed, giving a new graph $G_\tau $. Let   $x_c^t(\tau)$ denote the node features in $c$ after $t$ MPNN iterations applied to $G_\tau$, and denote by $\mathbf{t}(\tau)$ the vector of all target nodes $\mathbf{v}=(x_v, v\in T) $ obtained after $T$ MPNN iterations applied to $G_\tau$. Then $\mathbf{v}(\tau) $ is a function of the central nodes 
$$\mathbf{v}(\tau) =\mathbf{v}(x_c^t(\tau), t=1,\ldots,T)  $$
where $c$ is any fixed central node (here we use the fact that the nodes at all central nodes are the same, so we can look only at one of them). Since the MPNN solves the task exactly, we know in particular that $\mathbf{v}(\tau)\neq \mathbf{v}(\sigma)$ for any two distinct permutations $\tau\neq \sigma$. Therefore $\mathbf{v}$ can assume $n!$ different values as the permutation $\tau$ changes, and therefore, the  vector $(x_c^t(\tau), t=1,\ldots,T) $ can also attain $n!$ different values. For this to be possible, this  vector must of dimension sufficient to contain so many values. Namely the total dimension $d=\sum_{t=1}^T d_t$ of the vector must satisfy $(2^b)^d\geq n! $, which implies that
 $$ b\cdot d \geq \log_2(n!)\geq \log_2\left( \left(\frac{n}{2}\right)^{n/2} \right)=\frac{n}{2}\log_2(n/2) $$
\end{proof}



As a concluding remark, we note that the high feature dimension required by \cref{thm_two_radius} is not the only difficulty in the problem. As our experiments below show,  MPNNs indeed struggle to solve this problem even with high feature dimension. We believe that the main difficulty is the challenge of reliably mapping an input multiset of increasingly large size $n$ of node features into the intermediate representation carried by the central nodes, without incurring significant distortion. The dependence of the distortion of a multiset map on the problem size is discussed, e.g., in \cite[Theorem 3.3]{davidson2025on}.

\subsection{Empirical Performance of MPNNs on the Two-Radius Problem}
We next evaluate empirically whether the bottleneck phenomenon in the Two-Radius problem indeed leads to practical performance degradation. 
Specifically, we consider the case $k=1$ with $n \in \{10, 50, 150, 200\}$, using standard MPNN architectures: GCN \cite{Kipf:2017tc}, GIN \cite{Xu:2019powerful}, GAT \cite{Velickovic:2018we}, and GraphSAGE \cite{Hamilton:2017tp}. 
We evaluate all methods with feature dimensions of 256 and 1024 and three different learning rates, and report, for each method, the best result obtained across these runs. 
In addition, we evaluate a simple Set Transformer \cite{Lee:2019set} without augmenting with structural/positional encoding. The Set Transformer treats the vertex features as a multiset, while ignoring the edge structure of the graph.
Since Transformer allows pairwise interactions between all nodes, it is not expected to suffer from oversquashing. 
We further include a standard MLP, which, unlike the other methods, is not permutation invariant, and is thus a priori not expected to suffer from oversquashing, although it may lack sufficient inductive bias to generalize well on this task.


The experiment results appear in \cref{fig:acc_vs_n_dim}. 
As seen in the figure, MLPs perform poorly on this task even for small $n$.
MPNN performance degrades as $n$ increases, in line with our analysis. 
The only methods that succeed are the Transformer, which consistently achieves 100\% accuracy on all instances, and GAT, which, with a high embedding dimension, can reach 90\% accuracy even when $n=200$. 
In addition to its lower accuracy compared to the Transformer, GAT was substantially more difficult to train: these results required using a very low learning rate and training for many epochs. 
As shown in \cref{fig:epochs_and_k}(a), achieving at least 92\% accuracy required more than 100 epochs with GAT, whereas the Transformer needed fewer than 10. A more fine-grained evaluation of the effect of hidden-feature dimension appears in \cref{fig_model_accuracy_vs_dimension} in \cref{appendix_additional_experiment_results}.

\begin{figure}[!htbp]
    \centering
    \includegraphics[width=1\linewidth]{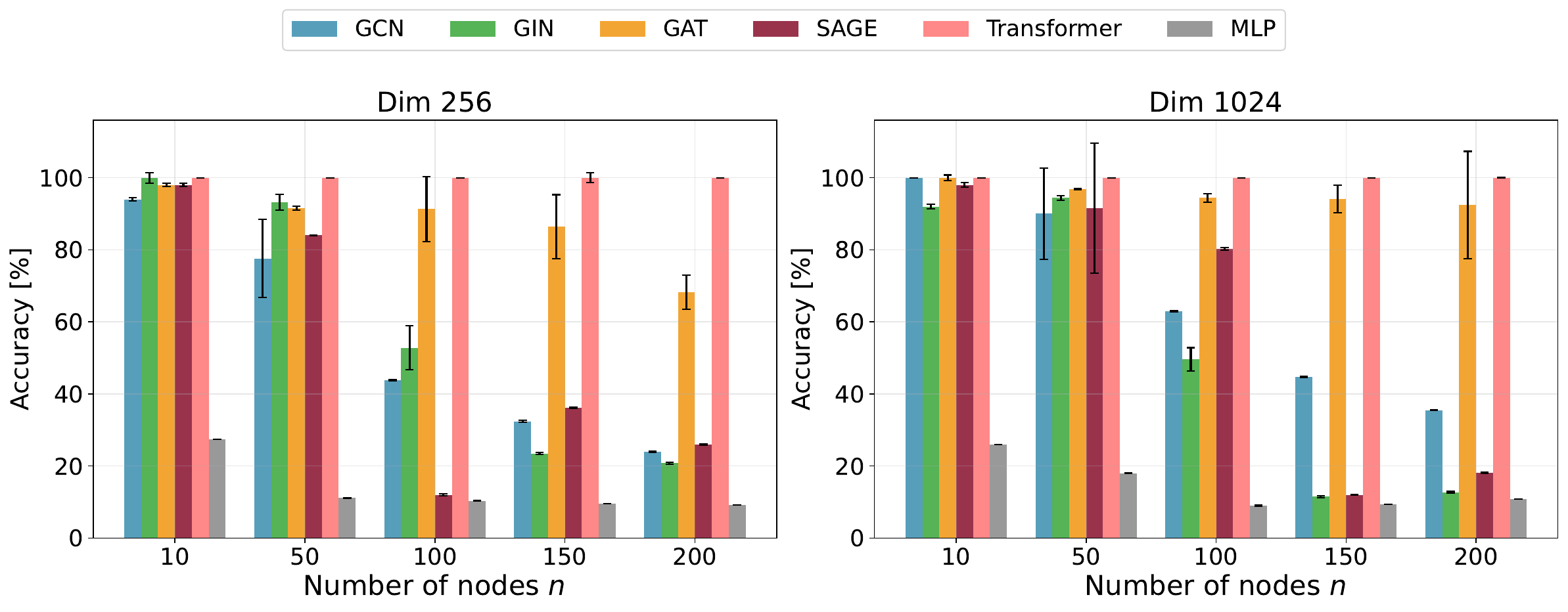}
        \caption{Test accuracy comparison across different models on the Two-Radius problem. Performance is evaluated for varying numbers of nodes $n \in \{10, 50, 150, 200\}$ with hidden dimensions of 256 and 1024. Transformer consistently achieves 100\% accuracy while MPNN performance degrades as $n$ increases. Error bars indicate the standard error of the mean.}
    \label{fig:acc_vs_n_dim}
\end{figure}

\FloatBarrier 

Next, we examine the effect of changing the number of central nodes $k$ on the performance of MPNNs. 
As noted earlier, increasing $k$ improves graph connectivity, but it is not expected to improve MPNN performance, because permutation equivariance implies that all central node features are identical across the message-passing process.
Empirically, running GCN on the Two-Radius problem with varying $k$, we found that performance indeed does not improve as $k$ grows and, perhaps surprisingly, is even worse than with $k=1$. 
This is shown in \cref{fig:epochs_and_k}(b).\\

\FloatBarrier 

\begin{figure}[!htbp]
    \centering
    \begin{minipage}[t]{0.48\linewidth}
        \centering
        \includegraphics[width=\linewidth]{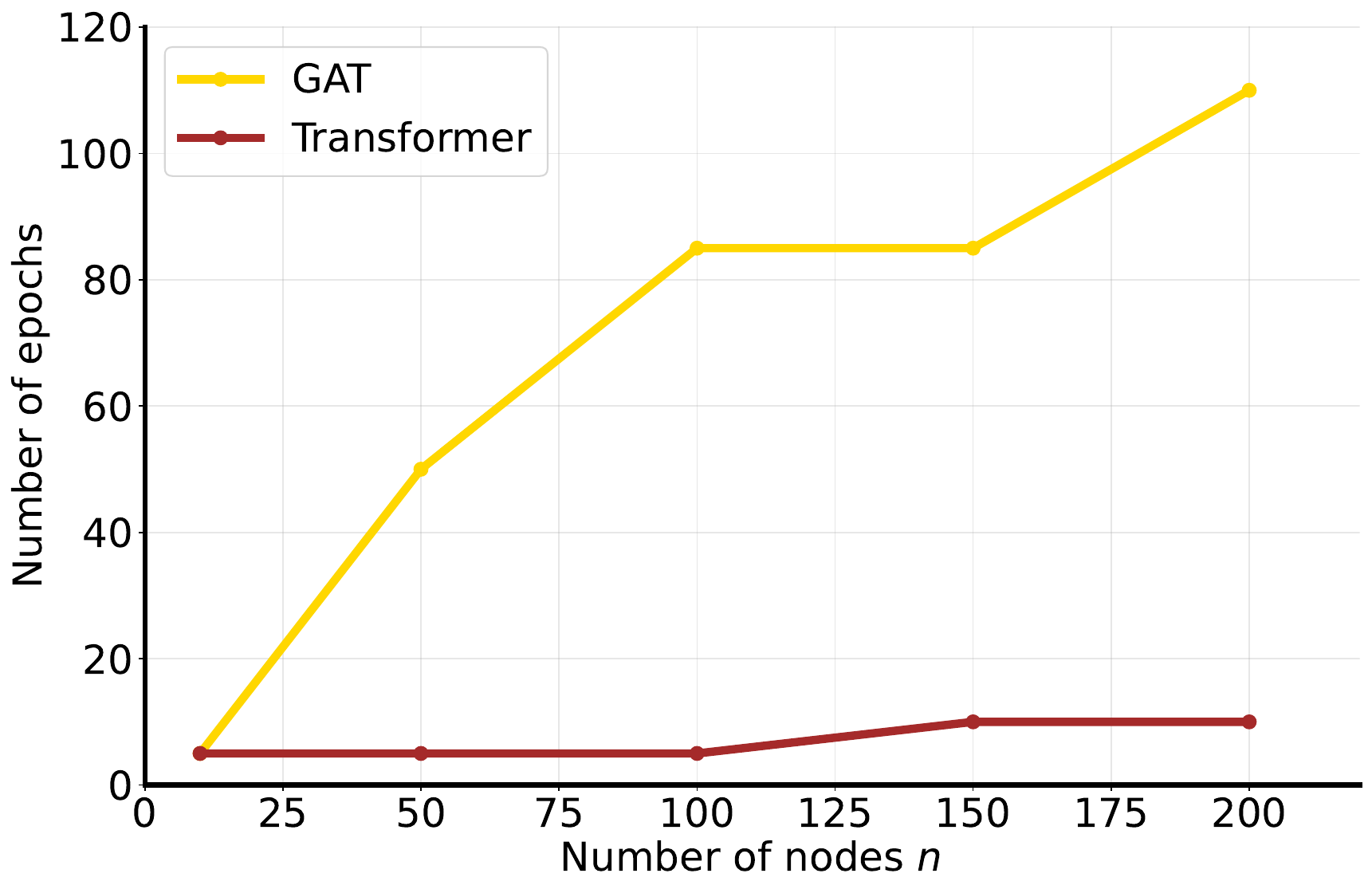}
        \vspace{2pt}{\small{(a)}}
    \end{minipage}\hfill
    \begin{minipage}[t]{0.48\linewidth}
        \centering
        \includegraphics[width=\linewidth]{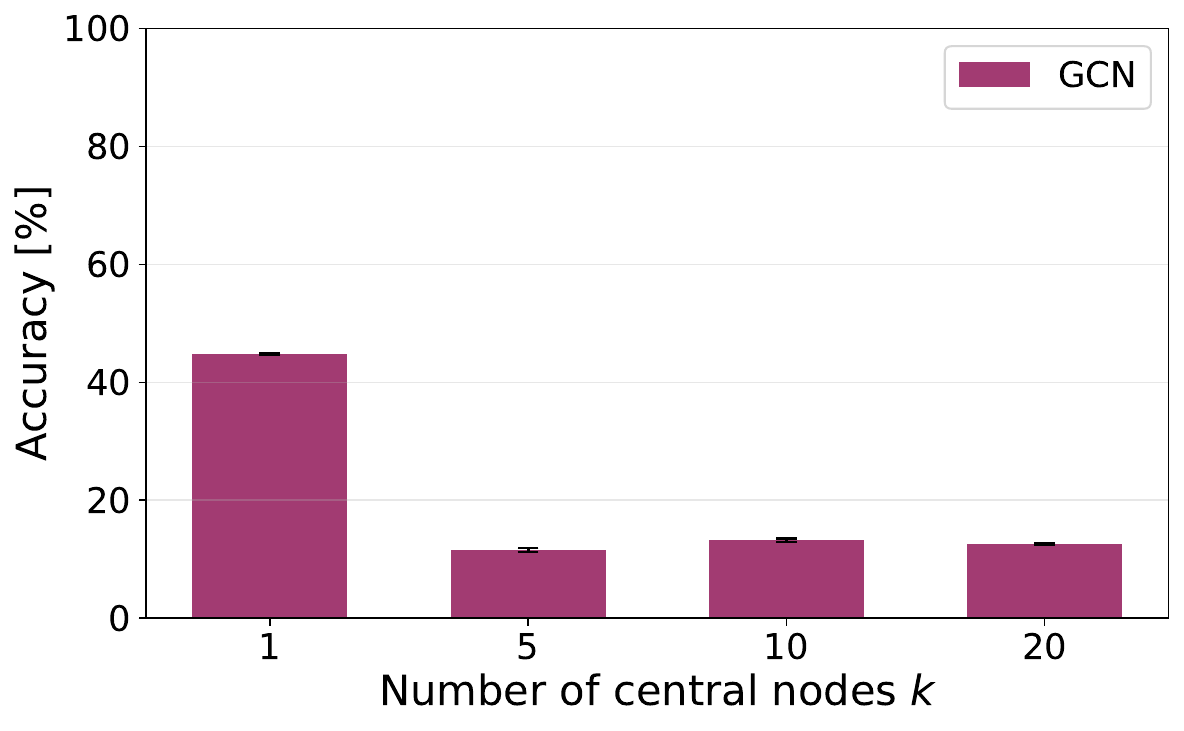}
        \vspace{2pt}{\small{(b)}}
    \end{minipage}
    \caption{(a) Training efficiency comparison between GAT and Transformer on the Two-Radius problem, measured in the number of epochs required to achieve 92\% accuracy. (b) Effect of the number of central nodes $k$ on GCN performance for the Two-Radius problem with $n=100$. Accuracy remains poor regardless of $k$, demonstrating that increasing graph connectivity via additional central nodes does not resolve the bottleneck phenomenon. Error bars in (b) (barely visible due to low variance) indicate the standard error of the mean.}
    \label{fig:epochs_and_k}
\end{figure}

In summary, we find that MPNNs struggle to solve the Two-Radius problem, whereas Set Transformer succeeds easily. 
This strengthens the case for Graph Transformers over MPNNs, and serves as an example of a scenario where disregarding the graph structure can be advantageous---a phenomenon discussed in \cite{bechler2024graph}.

\section{Oversquashing Measures Don't Explain the Two-Radius Problem}
%

As seen in the previous section, the Two-Radius problem suffers from the bottleneck effect (\cref{thm_two_radius}) and MPNNs struggle to solve it in practice. 
The goal of this section is to revisit common proposals for measuring and characterizing oversquashing, and to show that most of these measures fail to capture the oversquashing that arises in the Two-Radius problem.

\subsection{Current Explanations for Oversquashing}

\paragraph{Problem Radius}
In the original oversquashing paper \cite{alon2020bottleneck}, the authors argued that the root cause of oversquashing is a large \emph{problem radius}. Their reasoning connects several factors: the literal excessive compression of information in node features (oversquashing), the number of message-passing iterations, and, consequently, both the problem radius and the growth rate of the receptive field of message-recipient nodes, which are, respectively, causes and consequences of the number of MPNN iterations. 

In the Two-Radius problem introduced in the previous subsection, the problem radius is fixed at $2$, independent of $n$, and after only two message-passing iterations, the receptive field of all nodes is constant. This setting allows us to \emph{decouple} the effect of large problem radius, which leads to vanishing gradients, from the bottleneck effect, which arises from large receptive fields. 
We regard these as two distinct mechanisms underlying oversquashing.


\paragraph{Vanishing Gradients}
A common explanation of oversquashing is that long range leads to vanishing gradients. In fact, many explanations of oversquashing provide analysis showing how the size of the gradients depends on certain topological properties of the graph, and using this analysis to devise rewiring techniques aimed at improving these topological properties and reducing oversquashing. 

The presence of vanishing gradients in the Two-Radius and Ring Transfer problem was evaluated by computing the average gradient norm of the model output (GCN) over 10 randomly selected test samples. The results are shown in \cref{fig:vanishing}. For the Ring Transfer problem, both accuracy and gradient norm decrease drastically with the problem radius, whereas for the Two-Radius problem, as $n$ increases, the accuracy of MPNN decreases, but its gradient norm exhibits only minor decrease. This suggests that the vanishing gradient effect is related to long-range interactions, but not to the bottleneck effect. To ensure that the hindered performance in the Two-Radius problem is not caused by oversmoothing, we computed the relative Mean Absolute Difference (MAD) energy \cite{chen2020measuring} over the target nodes (see \cref{appendix_experimental_details} for details). As seen in the figure, this energy does not decay toward zero, thereby ruling out oversmoothing as the underlying cause.\\

\begin{figure}[h]
    \centering
    \includegraphics[width=1\linewidth]{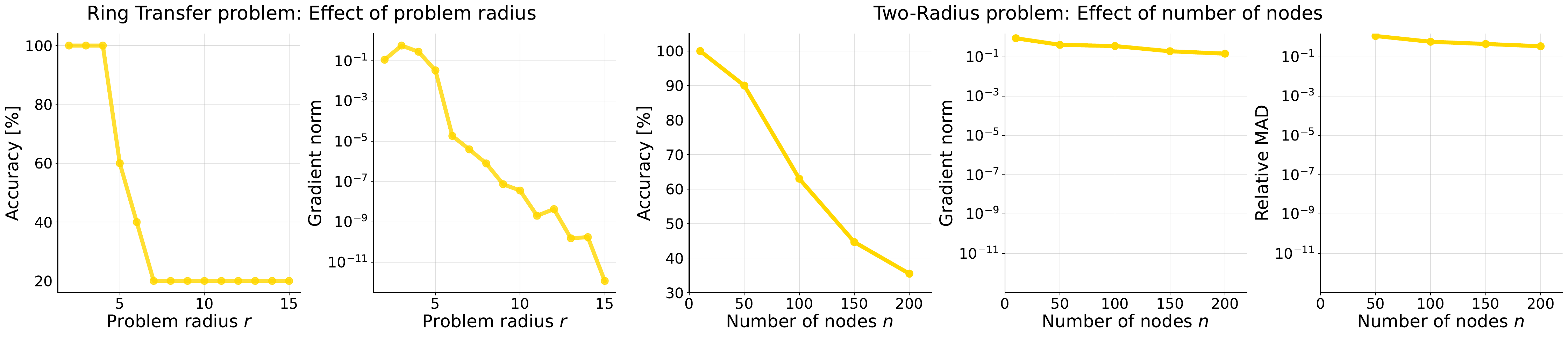}
    \caption{GCN's accuracy deteriorates as the problem radius increases,  which is correlated with  vanishing gradients. On the other hand, problems with bottlenecks are also difficult for MPNNs, but they do not suffer from vanishing gradients or oversquashing.}
    \label{fig:vanishing}
\end{figure}

\paragraph{Spectral methods}
Several works have proposed spectral quantities of the graph Laplacian as indicators of oversquashing, in particular the \emph{spectral gap} (the difference between the first and second eigenvalues of the normalized Laplacian) and the closely related \emph{Cheeger constant}, which measures the presence of sparse cuts in the graph \cite{topping2021understanding,banerjee2022oversquashing,karhadkar2022fosr}.  
Intuitively, graphs with a small spectral gap or low Cheeger constant contain bottlenecks that may hinder information flow, and such properties have been argued to correlate with oversquashing.  
However, in the Two-Radius problem with a single central node,  the Cheeger constant is equal to $1$ for all values of $n$. Moreover, when $k\geq 1$ central nodes are taken, the graph becomes more connected, and the Cheeger constant grows with $k$:
\begin{theorem}\label{thm:cheeger}
The Cheeger constant $h_G$ for the graph $\gG_{n,k}$ with $k\leq 2n$ central nodes is lower-bounded by $h_G \geq \frac{k}{8} .$  
\end{theorem}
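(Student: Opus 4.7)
The plan is to reduce the Cheeger minimization to a small two-parameter problem that exploits the bipartite structure of $\gG_{n,k}$. Let $S\subseteq V$ be any subset with $|S|\leq |V|/2 = n+k/2$, and let $s$, $t$, $c$ denote the number of source, target, and central nodes in $S$, writing $m=s+t$. Because every source and target is connected to all $k$ centrals (and to nothing else), each source in $S$ contributes $k-c$ boundary edges, each target likewise, and each central in $S$ contributes $(n-s)+(n-t)=2n-m$. Summing gives the clean formula
$$|\partial S| \;=\; m(k-c) + c(2n-m) \;=\; mk + 2c(n-m),$$
with $|S|=m+c$, so bounding $h_G$ reduces to minimizing $f(m,c)=(mk+2c(n-m))/(m+c)$ over $\{0\leq m\leq 2n,\ 0\leq c\leq k,\ m+c\leq n+k/2\}$.

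The key step will be a one-line monotonicity observation: a short derivative calculation shows that $\partial_m f$ has the same sign as $k - 2c - 2n$, which is nonpositive under the hypothesis $k\leq 2n$. Hence, for any fixed $c$, $f$ is minimized by pushing $m$ up to its maximum $m=n+k/2-c$, which saturates $|S|=|V|/2$. Substituting collapses the problem to a one-variable convex parabola in $c$ whose minimizer is $c=k/2$ with value $2nk/(2n+k)$. Invoking $k\leq 2n$ a second time bounds the denominator by $4n$, giving $f\geq k/2$, hence $h_G \geq k/2 \geq k/8$. The remaining edges of the feasible region ($c=0$ gives ratio exactly $k$; $c=k$ gives $\geq k$; $m=0$ gives $2n\geq k$) all exceed this continuous minimum and therefore do not threaten the bound.

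The only friction I anticipate is the integrality of $s,t,c$: the optimum $c=k/2$ need not be an integer. This is exactly where the slack between the tight bound $k/2$ and the stated bound $k/8$ pays off---rounding $c$ to $\lfloor k/2\rfloor$ perturbs $f$ by at most a small constant factor, comfortably absorbed by the factor-of-$4$ gap. The monotonicity step is where the hypothesis $k\leq 2n$ is indispensable: if $k>2n$, the sign of $\partial_m f$ flips and the minimizer shifts to a different corner, so the same argument would then yield a bound of order $n$ rather than order $k$, which is precisely why the statement restricts to $k\leq 2n$.
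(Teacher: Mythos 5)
Your proof is correct, and it takes a genuinely different route from the paper's while starting from the same boundary/volume expression. With $a=c$ and $b=m$, your formula $f(m,c)=\frac{mk+2c(n-m)}{m+c}$ is algebraically identical to the paper's $\frac{a(2n-b)+b(k-a)}{a+b}$. From there the paper proceeds by a three-way case split (on whether $a\geq k/2$ and whether $b\geq n$), in each case discarding one of the two numerator terms and crudely bounding $a+b\leq 4n$, which is why it only claims $k/8$. You instead relax to the continuous domain, observe that $\partial_m f$ has numerator $c(k-2c-2n)\leq 0$ under $k\leq 2n$, push $m$ to the constraint $m+c=n+k/2$, and minimize the resulting convex quadratic in $c$ at $c=k/2$. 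This yields the sharper value $\frac{2nk}{2n+k}\geq k/2$ and makes transparent exactly where the hypothesis $k\leq 2n$ is used (both in the sign of $\partial_m f$ and in the final $2n+k\leq 4n$). The continuous-optimization argument is cleaner and tighter than the paper's case analysis.

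One small clarification: your worry about integrality is unnecessary. You are lower-bounding the minimum of $|\partial A|/|A|$ over integer triples, and the integer feasible set is a subset of the continuous one, so the continuous minimum $\frac{2nk}{2n+k}$ is automatically a valid lower bound --- no rounding or slack is needed. The remark doesn't damage the proof, but you could drop it.
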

Nevertheless, the empirical and theoretical difficulty of the problem clearly increases with $n$, as shown in \cref{fig:acc_vs_n_dim} and \cref{thm_two_radius}, showing that these spectral criteria fail to capture the bottleneck effect responsible for oversquashing in this setting.


%
\paragraph{Ricci Curvature}
The concept of Ricci curvature as an explanation for oversquashing was introduced in \citet{topping2021understanding}. In that work, the authors proved upper bounds on certain gradient norms in terms of this curvature, thereby showing that edges with negative curvature act as ``information bottlenecks,'' which can lead to vanishing gradients. 

Specifically, their main relevant result (Theorem~4) states that for any pair of vertices \(i,j\) connected by an edge, and for any \(\delta > 0\), under some additional assumptions, there exists a nonempty set \(Q_j\) of vertices in the two-hop neighborhood of \(i\) such that
\[
    \frac{1}{|Q_j|} \sum_{k \in Q_j} \left| \frac{\partial h^2_k}{\partial h^0_i} \right|
    < (\alpha \cdot \beta)^2 \cdot \delta^{\tfrac14},
\]
where \(\alpha\) and \(\beta\) are upper Lipschitz bounds on the update and aggregation functions in the message-passing procedure, and \(\ell_0\) is any integer between \(0\) and \(L-2\), with \(L\) denoting the number of message-passing iterations. Their result requires
\[
    \mathrm{Ric}(i,j) + 2 \leq \delta < \max\{d_i, d_j\}^{-\tfrac12},
\]
where \(\mathrm{Ric}(i,j)\) is the \emph{Ricci} curvature of edge \((i,j)\), also referred to as the \emph{balanced Forman curvature}.

In the Two-Radius problem, however, every edge \((i,j)\) connects a vertex of degree \(1\) to a vertex of degree $n$. This implies \(\mathrm{Ric}(i,j) = 0\) (see Definition~1 in \cite{topping2021understanding}), and the above assumption becomes
\[
    2 \leq \delta < n^{-\tfrac{1}{2}} \leq 1,
\]
which is impossible. Hence, their result is not applicable to this problem and, in particular, does not predict oversquashing.

\paragraph{Effective Resistance} 
A closely related quantity, proposed by \citet{black2023understanding}, is the \emph{effective resistance}.  Inspired by the concept of resistance in electrical networks, the effective resistance between two nodes $u$ and $v$ decreases as the number of paths between them increases, and increases with the lengths of those paths.
Formally it is defined as
\begin{equation}
    R_{u,v} = \left(1_u-1_v\right) L^+ \left(1_u-1_v\right),
\end{equation}
where $L^+$ is the pseudo-inverse of the non-normalized graph Laplacian. In the special case where $u$ and $v$ are connected by vertex-disjoint paths, the effective resistance can be expressed by a simpler formula (see Figure~1 therein)
\begin{equation}\label{eq_effective_resistance_simple}
    R_{u,v} = {\left( \sum_{\textup{$p$ is a path from $u$ to $v$}} \textup{Length}(p)^{-1} \right)}^{-1}.
\end{equation}
%

To show that high effective resistance leads to oversquashing bottlenecks, the authors proved an upper bound on the Frobenius norm of the Jacobian in terms of the effective resistance between the nodes. 

However, in our example, it can be seen from \cref{eq_effective_resistance_simple} that the effective resistance between the source and destination vertices always equals $2$, independently of $n$, whereas the core phenomenon of oversquashing is clearly aggravated as $n$ increases.

%
\paragraph{Direct gradient-decay analysis} 
\citet{di2023over} further developed the ideas of \citet{black2023understanding} and derived upper bounds on the Jacobian norm directly from basic topological features of the input graph, that is, without passing through spectral graph theory. Their first result, Theorem 3.2, is of the form
\begin{equation}\label{giovanni_first_result}
    \lVert \frac{\partial h_v^{(m)}}{\partial h_u^{(0)}} \rVert
    \leq
    C^m \left(S^m\right)_{u,v},
\end{equation}
where $C$ is a constant depending on the model and hidden feature dimension, and $S$ is a matrix derived from the model parameters and the graph adjacency matrix. When the right-hand side of \cref{giovanni_first_result} decays exponentially with $m$, their theorem predicts exponential decay of the Jacobian norm. However, in our setting, the number of message-passing iterations is constant $m=2$, so the theorem does not predict exponential gradient decay.

In another result \cite[Theorem 4.1]{di2023over}, they present a bound that similarly predicts exponential gradient decay, with the exponent depending on the distance between the two nodes and the number of message-passing iterations. Since both quantities in our example are constant, this theorem does not predict the observed oversquashing.

The remaining results in \cite{di2023over} similarly assume either large distances or large number of message-passing iterations, both of which do not hold in our example.

In \cref{fig:problem}(b), it can be seen that for the Ring Transfer problem, the vanishing-gradient bounds do capture the oversquashing effect demonstrated empirically in \cref{fig:vanishing}.

%

\section{Virtual Nodes and Transformers}
Several papers have recently argued, both theoretically \cite{cai2023connection,qian2024probabilistic,rosenbluth2024distinguished,southern2024understanding} and empirically \cite{pham2017graph,qian2024probabilistic,hwang2024analysis}, that MPNNs with virtual nodes (VNs) can effectively handle oversquashing. 
In the context of long-range oversquashing, this is indeed reasonable, since virtual nodes reduce the effective problem radius to $2$. 
However, for the Two-Radius problem, a virtual node would act similarly to the existing central nodes in these graphs, so adding a virtual node is not expected to yield substantial improvement for MPNNs.

To evaluate this hypothesis, we compared GCN with and without a virtual node, as shown in \cref{fig:VN}. 
As seen in the figure, while adding a virtual node does seem to improve performance for some values of $n$, GCN+VN still fails to solve the Two-Radius problem for larger values of $n$. 
Thus, the Two-Radius problem provides a synthetic example in which MPNN+VN is less effective than Transformers.

\begin{figure}[t]
    \centering
    \includegraphics[width=0.5\linewidth]{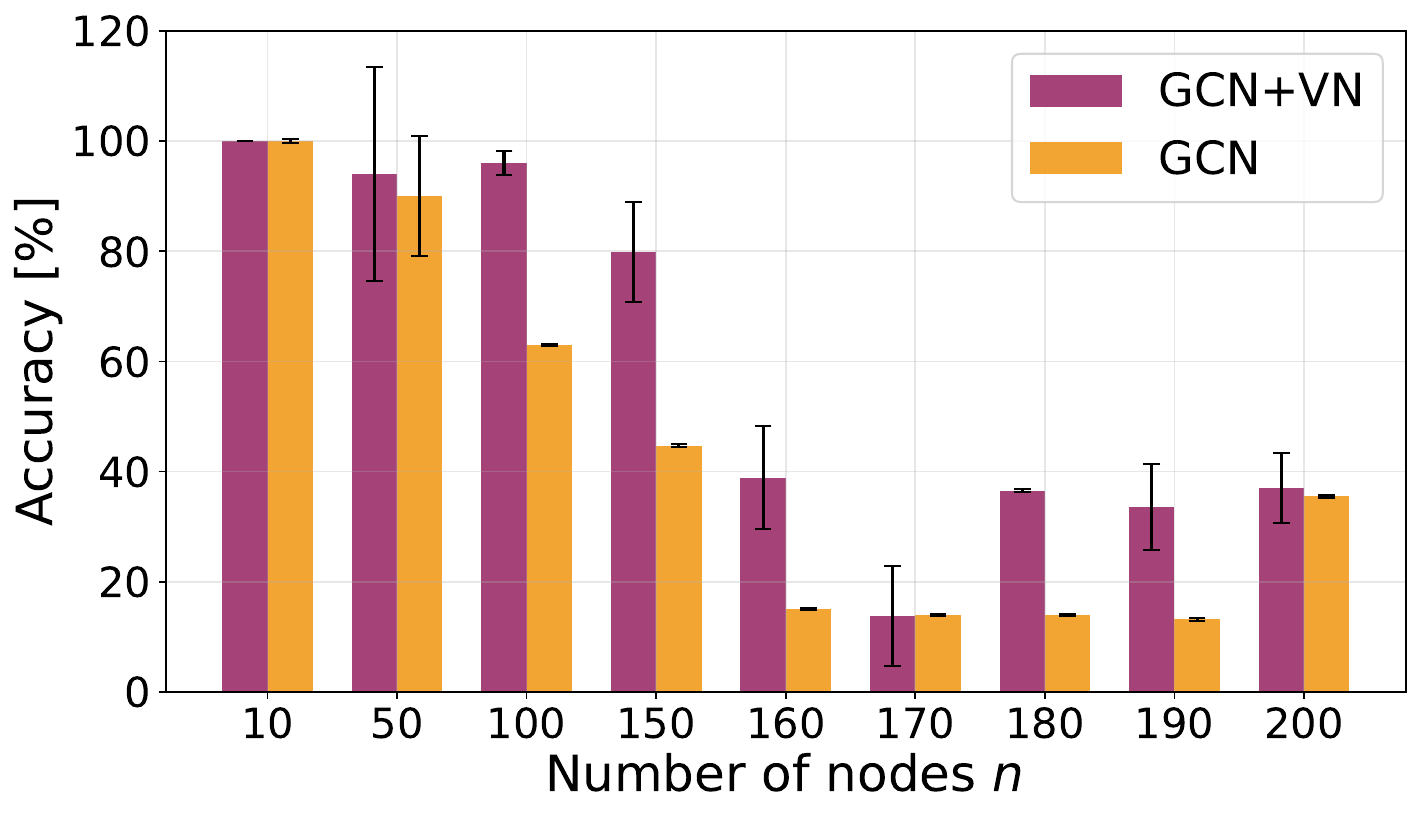}
    \caption{Comparison of GCN performance with and without virtual nodes (VN) on the Two-Radius problem. While virtual nodes provide modest improvements, performance still degrades significantly as $n$ increases, indicating that VNs do not fully address the bottleneck in short-range oversquashing. Error bars indicate the standard error of the mean.}
    \label{fig:VN}
\end{figure}

\section{Conclusion}
In this work, we revisited the phenomenon of oversquashing in MPNNs and demonstrated that it is not restricted to long-range tasks. 
Through the Two-Radius problem, we identified a setting where oversquashing arises even in short-range scenarios and showed that this corresponds to a bottleneck effect, separate from the vanishing gradient effect that dominates in long-range tasks. 

Our theoretical analysis established that solving the Two-Radius problem with MPNNs requires large feature dimensions that depend on the graph size, and our empirical results confirmed that standard MPNNs, even when augmented with virtual nodes, struggle on this task.
By contrast, Graph Transformers and related architectures succeed, underscoring their potential as a more robust solution to oversquashing in settings where MPNNs remain bottlenecked.

Our results also clarify the limitations of existing measures of connectivity as predictors of oversquashing, and highlight the need for refined metrics that account for short-range bottlenecks. 
We hope that the Two-Radius problem will serve as a useful benchmark for oversquashing in future studies, and that our analysis can guide the design of new architectures that combine the efficiency of MPNNs with the expressivity of transformers.

\bibliographystyle{unsrtnat}
\bibliography{reference}
\appendix
\section{Proofs}\label{app:proofs}
\begin{proof}[Proof of Theorem \ref{thm:constant_dim}]
The setting we are considering is that we are given some graph $G$, with a single source node $s$ with node features $x_s=(1,\ell_s) $, where $1$ is the node identifier, signifying that this is the source node, and $\ell_s$ is the source label coming from some finite alphabet $\{1,\ldots,A\}$. All other nodes $v$ of the graph are given an initial value $x_v=(0,\ell_v) $, where $\ell_v$ comes from the same alphabet (and is irrelevant to the problem at hand). There is one node, the target node $t$, whose shortest path distance from the node is $r$, and the goal is for an MPNN to update the target node to achieve the original value of the source node, namely $h_t^k=x_s $. 

It is clear that in less that $r$ iterations, an MPNN will not be able to achieve this goal. However, in $r$ iterations this goal is easily achieved by an MPNN whose feature dimension does not depend on $r$. Namely, we begin as always with the initial features  $h_v^0=x_v$, and iteratively define $h_v^{k+1}\in \mathbb{R}^2$ by 
$$ h_v^{k+1}[1]=\max_{u\in \mathcal{N}_v\cup \{v\}} h_u^{k}[1], \quad h_v^{k+1}[2]=\max_{u\in  \mathcal{N}_v\cup \{v\}} h_u^{k}[1]\cdot h_u^{k}[2] $$
We can then prove recursively on $k$ that if the distance of $v$ from $s$ is $\leq k$, then $h_v^k=h_s^0 $, and otherwise $h_v^k[1]=0 $. 

For $k=0$ this is clearly true. If the claim is true for $k$, then for $k+1$ we have that, if the distance of $v$ from $s$ is more than $k+1$, than all its neighbors $u$ will all have distance more than $k$ from $s$, and so  $h_u^k[1]=0 $  and therefore $h_v^{k+1}[1]=0 $. If $v$ is a node of distance $\leq k+1$ from $s$, then there is some $u\in N(v)\cup \{v\}$ whose distance from $s$ is $\leq k$. As a result $h_v^{k+1}[1]=1=h_s^0, h_v^{k+1}[2]=h_u^k[2]=h_s^0[2] $. 

In particular, it follows that after $r$ iterations the target node will obtain the value of the source node. This concludes the proof. 
\end{proof}

\begin{proof}[Proof of Theorem \ref{thm:cheeger}]
Recall that for a graph $G$, and a subset $A\subseteq V$, the boundary $\partial A$ is defined as the number of edges between nodes in $A$ and nodes in the complement of $A$, and the Cheeger constant is defined as 
$$h_G=\min_{A\subseteq V, 0<|A|\leq |V|/2} \frac{|\partial A|}{|A|} .$$
Now let $A\subseteq V$ with $0<|A|\leq |V|/2 $. Denote the number of central nodes in $A$ by $a$ and the number of source and target nodes by $b$. By assumption 
$$a+b\leq |V|/2=n+k/2.$$
We now consider three cases:
\textbf{case 1:} If $a\geq k/2$, then necessarily $b\leq n$. It follows that 
$$\frac{|\partial A|}{|A|}\geq \frac{ a\cdot (2n-b)}{a+b}\geq \frac{nk/2}{4n}=\frac{k}{8} $$

\textbf{case 2:} If $b\geq n$ then necessarily $a\leq k/2$. It follows that 
$$\frac{|\partial A|}{|A|}\geq \frac{b \cdot (k-a)}{a+b} \geq \frac{n \cdot k/2}{4n}=\frac{k}{8}. $$

\textbf{case 3:} If both $b\leq n$ and $a \leq k/2 $. Then 
$$\frac{|\partial A|}{|A|}=\frac{a(2n-b)+b(k-a)}{a+b}\geq \frac{na+bk/2}{a+b}\geq k/2.$$

This concludes the proof.
\end{proof}

\section{Additional Experimental Results}
\label{appendix_additional_experiment_results}

To complement the analysis in the main text, here we presents a more fine-grained evaluation of the effect of the hidden-feature dimension on learning accuracy (\cref{fig_model_accuracy_vs_dimension}).

The results follow the same experimental setup as \cref{fig:acc_vs_n_dim} and further illustrate that increasing the number of nodes makes the problem challenging for MPNNs, whereas the Transformer remains robust, with GAT ranking between the two. As shown in the figure, this difficulty is partially mitigated by increasing the hidden-feature dimension, which improves the performance of both MPNNs and GAT but does not close the gap with the Transformer. Overall, these results reinforce that MPNNs are limited by the bottleneck effect, whereas the Transformer remains largely unaffected. \\

\begin{figure}[h]
    \centering
    \includegraphics[width=\linewidth]{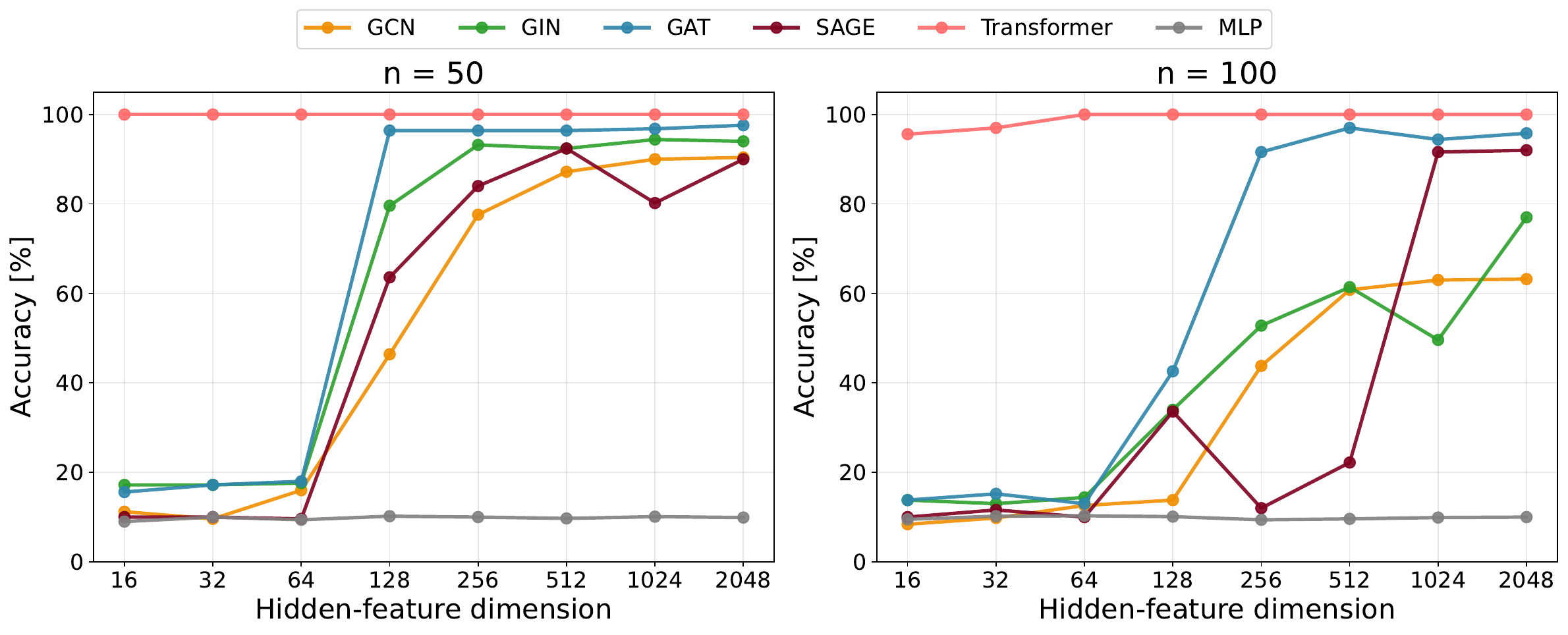}
    \caption{
        Fine-grained evaluation of the effect of hidden-feature dimension
        on learning accuracy in the same experimental setting as
        \cref{fig:acc_vs_n_dim}.
    }
    \label{fig_model_accuracy_vs_dimension}
\end{figure}

\section{Experimental Details}
\label{appendix_experimental_details}
%
%

\subsection*{Input Representation}
The input representation consists of two concatenated one-hot vectors: one encoding the ID and another encoding the label. This combined one-hot representation is fed directly into the initial MPNN layer, which maps it from the input dimensionality to the hidden dimension. Our implementation does not use a separate encoder or projection layer for this operation.

\subsection*{Mean Absolute Difference (MAD)}
In \cref{fig:vanishing} we included a relative variant of the Mean Absolute Difference (MAD)---an energy proposed by \citet{chen2020measuring} to assess the extent to which oversmoothing takes place. For the output feature vectors $h^{K}_{v}$ of the target nodes $v \in T$, it is computed by:
\begin{equation}
\frac{\frac{1}{\tfrac{1}{2} \lvert T \rvert \left( \lvert T \rvert - 1\right)}\sum_{u \neq v \in T} \lVert h^{K}_{u} - h^{K}_{v}\rVert}
{ \frac{1}{\lvert T \rvert} \sum_{v \in T} \lVert h^{K}_{v}\rVert },
\end{equation}
namely, the average pairwise distance between the target node features, divided by the average norm of those features.

\subsection*{Hardware and Software}
All experiments were conducted on an NVIDIA A40 GPU with 48GB memory, using CUDA 12.8. We implemented all models using PyTorch with PyTorch Geometric for graph operations and PyTorch Lightning for training management.
\subsection*{Training procedure}
All models are trained using Adam optimizer with cross-entropy loss, running for a maximum of 1000 epochs with ReduceLROnPlateau learning rate scheduler.

\subsection*{Model Architectures and Initialization}
We evaluated the Two-Radius problem with node configurations $n \in \{10, 50, 100, 150, 200\}$ and central nodes $k=1$ (default), with additional ablation studies using $k \in \{5, 10, 20\}$. For  reproducibility, we fixed the seed to be 0.
\subsection*{Implementation details}
\begin{itemize}
    \item For the Two-Radius problem, we masked source and central nodes during evaluation, computing accuracy only on target nodes.
    \item In gradient norm computation, we use the first 5 test samples to reduce computational cost.
    \item\noindent\textbf{Central nodes (k).} Fixed $n=100$, varied $k \in \{1, 5, 10, 20\}$ using GCN with lr = $5 \times 10^{-4}$. 
\item \textbf{Node Identifiers and Labels.} 
For the Two-Radius problem, nodes are initialized as follows:
\begin{itemize}
    \item All identifiers and labels are encoded as one-hot vectors.
    \item \textit{Source nodes:} Each source node $s \in S$ receives a unique identifier $\id_s$ randomly sampled without replacement from $\{1, \ldots, n\}$, and a label $\ell_s$, chosen randomly from $\{1, \ldots, L\}$ where $L=10$ is the number of classes in our experiments.
    \item \textit{Target nodes:} Each target node $t \in T$ receives a unique identifier $\id_t$ from $\{1, \ldots, n\}$, chosen randomly without replacement. Labels are assigned a constant value $L$, chosen arbitrarily.
    \item \textit{Central nodes:} In the case of one central node $c$, the node was assigned the identifier $\id_c = n+1$. In the case of $k$ central nodes, identifiers were assigned $\{n+1, \ldots, n+k\}$. All central nodes were labeled $\ell_c = 0$.
\end{itemize}

\item\noindent\textbf{Virtual Nodes (VNs).}
To test whether VNs mitigate the short-range bottleneck, we augment GCN with virtual nodes following \citet{cai2023connection} and \citet{hwang2024analysis}. In the single-VN variant, the VN is initialized to zero and, after each layer, a two-layer MLP updates it; the resulting VN embedding is then added (broadcast) to every node representation. In the multiple-VN variant, each VN has its own MLP and is updated independently, and at each layer the sum of all VN embeddings is added to every node. The VN update uses a global aggregation over node features (either mean or sum). We compare GCN with and without VNs using tuned learning rates (5\(\times 10^{-4}\) with VN; \(1\times 10^{-4}\) without).

\item \textbf{Set Transformer.} 
Following \citet{Lee:2019set}, we implement a set-based transformer that \emph{ignores edges} and operates solely on node features,treating the input as a multi-set. This Set Transformer architecture, which can be viewed as a Graph Transformer operating without explicit edge information, processes nodes as an unordered collection where multiplicities matter. In the Two-Radius setting, the model receives one-hot node features (IDs and labels) without connectivity information; thus, source/target/central roles must be inferred from feature patterns. All nodes interact simultaneously via self attention, so any node can attend to all other nodes without routing information through central nodes.

This design is particularly effective here because target nodes can directly attend to sources with matching IDs, bypassing the central-node bottleneck that constrains MPNNs.
\end{itemize}
\subsection*{Hyperparameter Selection}

Table~\ref{tab:model-hyperparams} presents the hyperparameter configurations used for all model architectures in our experiments. These parameters were selected through preliminary experiments that balanced memory constraints, computational runtime, and model performance. Each configuration represents the optimal trade-off between these factors for the respective architecture. Table~\ref{tab:optimal-lr} shows the best-performing learning rates for each model across different hidden dimensions (256 and 1024), determined by evaluating three candidate learning rates per configuration and selecting the one yielding the highest validation accuracy. Together, these hyperparameter choices ensure reproducible and fair comparisons across all evaluated models.

\vspace{+2cm}
\begin{table}[!h]
	\centering
	\caption{Model Hyperparameters}
	\label{tab:model-hyperparams}
        \scriptsize
	\begin{tabular}{lcccccc}
		\toprule
		\textbf{Hyperparams} & \textbf{GCN} & \textbf{GAT} & \textbf{GIN} & \textbf{GraphSAGE} & \textbf{MLP} & \textbf{Set Transformer} \\
		\midrule
		batch\_size & 64 & 32 & 64 & 64 & 64 & 64 \\
		train\_samples & 7000 & 7000 & 7000 & 7000 & 7000 & 7000 \\
		test\_samples & 700 & 700 & 700 & 700 & 700 & 700 \\
		layers & 4 & 4 & 4 & 4 & 4 & 2 \\
		activation & LeakyReLU & LeakyReLU & LeakyReLU & LeakyReLU & ReLU & ReLU \\
		residual & \checkmark & \checkmark & \checkmark & \checkmark & \checkmark & \checkmark \\
		use\_layer\_norm & \checkmark & \checkmark & \checkmark & \checkmark & \checkmark & \checkmark \\
		use\_activation & \checkmark & \checkmark & \checkmark & \checkmark & \checkmark & \checkmark \\
		Attention\_heads & -- & 2 & -- & -- & -- & 2 \\
		lr\_factor & 0.1 & 0.3 & 0.5 & 0.1 & 0.1 & 0.5 \\
		dropout & -- & 0.1 & -- & -- & 0.3 & 0.1 \\
		\bottomrule
	\end{tabular}

\end{table}

\vspace{+1cm}  
\begin{table}[!h]
\centering
\caption{Optimal Learning Rates}
\label{tab:optimal-lr}
\begin{tabular}{lcc}
    \toprule
    \multirow{2.5}{*}{\textbf{Model}} & \multicolumn{2}{c}{\textbf{Hidden Dimension}} \\
    \cmidrule(lr){2-3}
    & \textbf{256} & \textbf{1024} \\
    \midrule
    GCN & $5 \times 10^{-4}$ & $1 \times 10^{-4}$ \\
    GAT & $1 \times 10^{-4}$ & $5 \times 10^{-5}$ \\
    GIN & $5 \times 10^{-5}$ & $5 \times 10^{-4}$ \\
    GraphSAGE & $5 \times 10^{-5}$ & $1 \times 10^{-4}$ \\
    MLP & $1 \times 10^{-4}$ & $1 \times 10^{-4}$ \\
    Set Transformer & $1 \times 10^{-3}$ & $1 \times 10^{-3}$ \\
    \bottomrule
\end{tabular}
\end{table}

\end{document}